\theoremstyle{plain}
\newtheorem{theorem}{Theorem}[section]
\theoremstyle{definition}
\theoremstyle{remark}
\title{Provably Invincible Adversarial Attacks on Reinforcement Learning Systems: A Rate-Distortion Information-Theoretic Approach
}
\author{
  Ziqing Lu \\
  University of Iowa \\
  \texttt{ziqlu@uiowa.edu}
  \And
  Lifeng Lai \\
  University of California, Davis \\
  \texttt{lflai@ucdavis.edu}
  \And
  Weiyu Xu \\
  University of Iowa \\
  \texttt{weiyu-xu@uiowa.edu}
}
\begin{document}
\maketitle

\begin{abstract}
Reinforcement learning (RL) for the Markov Decision Process (MDP) has emerged in many security-related applications, such as autonomous driving, financial decisions, and drone/robot algorithms. In order to improve the robustness/defense of RL systems against adversaries, studying various adversarial attacks on RL systems is very important. Most previous work considered deterministic adversarial attack strategies in MDP, which the recipient (victim) agent can defeat by reversing the deterministic attacks. In this paper, we propose a provably ``invincible'' or ``uncounterable'' type of adversarial attack on RL. The attackers apply a rate-distortion information-theoretic approach to randomly change agents' observations of the transition kernel (or other properties) so that the agent gains zero or very limited information about the ground-truth kernel (or other properties) during the training. We derive an information-theoretic lower bound on the recipient agent's reward regret and show the impact of rate-distortion attacks on state-of-the-art model-based and model-free algorithms. We also extend this notion of an information-theoretic approach to other types of adversarial attack, such as state observation attacks.
\end{abstract}

% keywords can be removed
\keywords{reinforcement learning, adversarial attack, rate-distortion theory}

\section{Introduction}
Reinforcement learning (RL) algorithms are used in many safety or security-related applications, such as autonomous driving \cite{ad}, financial decisions \cite{fd}, recommendation systems \cite{rs}, wireless communication \cite{wireless-commu}, and also drone/robot algorithms \cite{drones}. However, safety concerns remain when RL is deployed in these real-world applications. Studying potential adversarial attacks on RL systems can help evaluate the worst-case performances of RL agents under these attacks. This will in turn help us limit the damage imposed by adversarial parties, defend against adversarial attacks, and therefore build more robust and secure RL systems. The poisoning adversarial attacks on RL can be categorized based on the specific component targeted by the attacker, either on the victim agent itself or the environment. Common types include action attacks, reward attacks, state attacks, environmental attacks, or mixed attacks \cite{att-costsig, davis-ma, ada-rpa,davis-sa, batch-rl, spaold, envattack1,att-costsig, poli-induc}. These attacks either directly change the features of agents such as actions, rewards, and states of the MDP, or perturb the transition dynamics of the environment.

However, most previous works considered \emph{deterministic} attacks on the state transitions such that the victim agent will observe a false but deterministic transition kernel. Unfortunately, these types of deterministic attacks often assume that the victim agent is unaware of the existence of attacks. In fact, if the victim is aware of the attacks, it can potentially find a mapping between the learned false transition kernel and the ground-truth kernel and then reversely recover the ground-truth kernel from the false learned kernel. This will potentially make adversarial attacks ineffective. Moreover, these adversarial attacks cannot be guaranteed to work or be proven to be ``invincible'': it is not clear whether these attacks are effective regardless of what defenses the victim agent might use.

Unlike previous works, we propose a provably ``invincible'' poisoning adversarial attack. This adversarial attack fools the victim agent into observing a \emph{random} false state transition kernel instead of a deterministic false transition kernel. Since the transition kernel itself records transition probability distributions between states, this new attack imposes a probability distribution over the state transition probability kernel, namely, it introduces a probability distribution over ``(transition) probability distributions''. Through this attack, the observed transition kernel that the victim agent learns in RL training will provably offer zero or little information about the ground-truth kernel, guaranteed through information-theoretic analysis. This attack is ``invincible'' because even if the victim agent knows the attacker's strategy, the victim agent does not know exactly what specific ground-truth transition kernel is. This in turn makes the victim agent experience an inevitable regret in the obtained reward due to its uncertainty about the ground-truth transition kernel. 

The motivation and modeling of such attacks on MDPs with random kernels can be explained through an air defense application example. Consider two opposing agents, $A$ and $B$. $B$ uses probing-drones to determine whether $A$ has an unlimited-resource air defense system. During the training stage, $B$ uses probing-drones to learn about $A$'s defense deployment, while in the test stage, $B$ launches a large-scale attack-drone attack. Before training, $B$ knows the prior probability of $A$ having an unlimited-resource air defense system (e.g., $50\%$). Here the number of drones is the state of the MDP. If an air defense system is present, the MDP evolves according to the transition kernel $X_1$; otherwise, it follows $X_2$. $A$ can change the transition kernel in the training stage using an environment hyperparameter: setting up air defense or not for the probing-drones.

In our formulation, $A$ acts as the attacker, while $B$ is the victim. Notice that the term ``attack'' refers to adversarial manipulation of the MDP, not a physical drone strike. $A$ can perform the poisoning attack proposed in our paper as follows: If the ground-truth kernel is $X_1$, meaning Agent $A$ has an unlimited-resource air defense system, $A$ will use the defense with a probability of $0.5$ during the training stage. If $A$ chooses to use air defense, it will do so consistently throughout training; otherwise, it will not use air defense at all. If there is no unlimited-resource air defense system, $A$ may borrow one (or deplete its limited resources) with a probability $0.5$. As a result, $B$ will calculate a posterior probability of $0.5$ for both $X_1$ and $X_2$, regardless of whether $A$ has an unlimited-resource air defense or not. Thus, $B$ gains no useful information about the true transition kernel, and $A$'s adversarial manipulation is successful.

In this paper, we provide a lower bound on the expected regret experienced by the victim agent, regardless of the defense strategies the victim adopts, through an information-theoretic analysis. To the best of our knowledge, this is among the first works to formulate adversarial attacks on RL algorithms using rate-distortion theory from the information theory perspective. 
In addition, we introduce a theoretical analysis of the existence of an optimal policy for MDPs with uncertain transition kernels. We propose a new policy iteration algorithm to find the optimal policy for MDP with uncertain transition kernels. Our numerical and theoretical results show that the rate-distortion information-theoretic adversarial attack can greatly reduce the victim agent's expected reward compared to deterministic attacks.

In this paper, we focus on adversarial attacks in which we adopt an information-theoretic rate-distortion approach to change the ground-truth transition kernels to random rather than deterministic delusional transition kernels. However, this rate-distortion approach is very general and can be applied to other types of adversarial attacks, for example, state perception attacks, action attacks and reward attacks mentioned above. In applying this rate-distortion idea to these attacks, we randomly rather than deterministically map the ground-truth quantities or actions to delusional ones. In fact, \emph{random state perception attacks} can be used to achieve random-transition-kernel attacks discussed above: Please see Section \ref{sec:formulation}, \emph{Attack procedure}.

\textbf{Related works}: In a nutshell, we propose a novel adversarial attack by injecting statistical uncertainties into the transition kernel, as inferred by the victim agent. We discuss related works from two aspects: (1) uncertainty of transition kernels, as studied in robust and Bayesian RL, and (2) adversarial attacks targeting transition functions: 

(1) \textbf{Uncertainty of transition kernels.} In distributionally-robust Markov decision processes (DR-MDP), agents seek to optimize their performance under the worst-case transition kernel within an ambiguity set around the nominal transition probability. \cite{iyengar2005robust,nilim2003robustness, pinto2017robust}. \cite{xu2010distributionally} extends the approach to DR-MDP with parameter uncertainty and uses the probabilistic information of the unknown parameters.   %In \cite{robust-marl}, the authors discussed finding a robust policy using model-based techniques in a class of robust Markov games (RMG), where they consider an uncertainty region of the transition kernels, and aim to improve the expected reward in the worst-case scenario. 
However, our work is different from the robust RL literature in that we assume the uncertainty of the transition function from a Bayesian perspective. There is a prior probability distribution over the transition kernels, instead of a fixed small uncertainty set for the transition kernel. 

From a statistical perspective, the closest related work is the Bayesian adaptive MDP (BA-MDP) framework \cite{duff2002optimal, ghavamzadeh2016bayesian, lin2022bayesian}. BA-MDP assumes a prior distribution over problem instances and updates the posterior according to the Bayes rule as data are collected. In BA-MDPs, uncertainty over the transition function is typically parameterized by a set of parameters $\bm{\Phi}$. Starting with a prior belief about $\bm{\Phi}$, the agent collects data and uses Bayes' rule to compute the posterior distribution of the parameters.  A traditional approach to solving such problems involves belief tracking operation: keeping a joint posterior distribution over model parameters and the true physical state of the MDP, and deriving a policy that selects optimal actions with respect to the both the state and the posterior distribution. In contrast, we assume that %the agent is given a prior and has access to the full set of possible transition kernels, but 
the victim agent's belief over transition kernels remains ``ambiguous''  during the whole training, instead of BA-MDP's concentrating to the ground-truth kernel as the number of time steps goes to infinity. This is because under our adversarial attacks, the ground-truth transition kernels are still uncertain or ambiguous to the victim agent with its observations corrupted or attacked. Therefore, the distribution over transition kernels should remain ``ambiguous'' throughout the training. Most importantly, none of the previous works listed above considered an adversarial attack setting. 

(2) \textbf{Poisoning attacks on transition kernels}: \cite{envattack1} attacks the victim agents by manipulating the environment hyper-parameters corresponding to the dynamics of the training environment. In \cite{rakhsha2020policyteachingenvironmentpoisoning}, the authors propose an offline poisoning attack such that the attacker constructs an optimal fake transition kernel according to a target policy and places it in the victim's planning environment. \cite{envattack1} considers an adapative attack according to the victim's policy, and \cite{rakhsha2020policyteachingenvironmentpoisoning} changes the transition kernel deterministically. However, our proposed attack changes the transition kernel statistically from a designed distribution. \cite{franzmeyer2024illusory} integrates information-theoretic detectability constraints to create stealthy adversarial attacks. However, they use information theory to constrain the statistical detectability of attacks during test-time, by bounding the KL-divergence between the attacked and original observation trajectories. In contrast, our rate-distortion attack focuses on attacking the transition dynamics randomly during the training, aiming to minimize the mutual information between the agent's observed and ground-truth environment. \cite{game-theoretic} formulates robust RL as a constrained
minimax game between the RL agent and an environmental adversary that represents uncertainties such
as model parameter variations and adversarial disturbances.  In \cite{game-theoretic}, the adversary environmental agent picks an optimal policy parameter which generates a distribution over the environment uncertainty parameter, assuming that the victim agent chooses an optimal policy of its own. In contrast, the adversary considered in our paper 
%does not control the environment uncertainty parameter, but instead 
statistically changes the victim agent's perception of the MDP's underlying transition kernel into false transition kernels following a ground-truth-kernel-dependent distribution. 
In addition, different from all of the above, we consider the Bayesian setting for the ground-truth transition kernel, where there is a prior probability distribution over possible ground-truth kernels. Also note that our attack works in an offline setting, where the attackers first statistically change the transition kernel, after which the victim RL agent finds its policy given the observed transition kernel.

\section{Problem Formulation}
\label{sec:formulation}

We consider an MDP where its state transition kernel is random and has a prior probability distribution $p$ over its sample space. Once the ground-truth transition kernel is sampled from this prior probability, it will remain the same during both training and testing stages, for every time step and every episode. An adversarial attacker tries to perform attacks on the transition kernel so that the victim agent will learn a false or ambiguous transition kernel during training. 

An MDP with the random transition kernel can be represented by a five-element tuple: $(\mathcal{S},\mathcal{A},X,r,\gamma)$, where $\mathcal{S}$ is the state space with $|\mathcal{S}|=S$. $\mathcal{A}$ is the action space with $|\mathcal{A}|=A$. $X:\mathcal{S}\times\mathcal{A}\times\mathcal{S}$ is the discrete random transition kernel sampled from a prior distribution $p$. $p$ is a discrete prior distribution. $r:\mathcal{S}\times\mathcal{A}\times\mathcal{S}\rightarrow[0,1]$ is the reward function and $\gamma\in [0,1]$ is the discount factor. %We consider a finite horizon MDP, and 
We use $\pi:X\times\mathcal{S}\rightarrow \mathcal{A}$ to denote the deterministic policy of the victim agent. We let $\Omega(X)$ be the sample space of the random transition kernel $X$, and, for the sake of easy presentation, we assume that there is a finite number ($|\Omega(X)|$) of possible transition kernels. 

In the following, we use the notation $\pi(X_i)$ to denote a policy for the fixed transition kernel $X_i$, where $X_i\in\Omega(X), i=1,2,\dots, |\Omega(X)|$. We use the notation $\pi(X_i)(s)$ to denote the policy $\pi(X_i)$ maps from state $s\in\mathcal{S}$ to some action $a\in\mathcal{A}$. As standard in the RL literature, we let the V-value be a state value function that records the expected reward associated with every state $s\in\mathcal{S}$. We denote the V-value for an agent following an arbitrary $\pi$ under a fixed transition kernel $X_i$  as $V^{\pi}_{X_i}(s)=\mathbb{E}_{X_i}[\sum_{k=0}^{\infty} \gamma^kr(s_k,a_k,s_{k+1})|a_k\sim\pi,s_0=s]$, %where $s\in\mathcal{S}$ is the current state, $a\in\mathcal{A}$ is the action chosen from $\pi$ and $s'\in\mathcal{S}$ is the next state. 
For every fixed ground-truth transition kernel $X_i$, the optimal policy corresponding to $X_i$ is defined as $\pi^*(X_i)=\arg\max_{\pi}V^{\pi}_{X_i}$. 

We consider an adversarial attack setting in which the attackers can change the agent's perception of the probability transition kernel $X$ to a random delusional kernel $Y$ from the sample space $\Omega(Y)$. For the sake of presentation, we consider $\Omega(Y)=\Omega(X)$, but they can be different.

In previous work on RL adversarial attacks, $X$ is often changed to a different but deterministic transition kernel $Y$ \cite{rakhsha2020policyteachingenvironmentpoisoning}. However, the victim agent can potentially reversely recover the ground-truth $X$, and there is no guarantee that such an adversarial attack is effective. We instead propose a rate-distortion attack, where we statistically change the victim agent's perception of $X$ into random $Y$, during the RL training.  

\textbf{Attack setting.} Suppose that the ground-truth probability transition matrix $X$ follows a prior distribution $p(X)$, the attackers design a rate-distortion attack by setting the likelihood $P(Y|X)$, which is the probability of the victim agent seeing the delusional transition kernel $Y$ given the ground-truth kernel is $X$. An illustration graph of the rate-distortion attack is in Figure \ref{rd-attack} for $3$ possible ground-truth kernels and 3 possible delusional perceived kernels. In this example, if the ground-truth kernel is $X_1$, we attack the RL system such that the victim agent sees delusional kernels $Y_1$, $Y_2$, and $Y_3$ with probability $0.1$, $0.2$, and $0.7$ respectively.

There is a cost, denoted by $C(X\rightarrow Y)$ associated with changing $X$ to perception $Y$, and the attackers are constrained by an attack budget $B$ on the expected cost.  The attacker's goal is to change $X$ to perception $Y$ randomly so that they follow a joint probability distribution $p(X,Y)$ satisfying the following conditions:

(1) The average cost of changing from $X$ to $Y$ is less than some budget $B$. This constraint can be transformed into a constraint on $p(X,Y)$. 

(2) We assume that the victim agent is aware of the existence of the attack. The agent knows the posterior conditional probability $p(X|Y)$ and the joint probability $p(X,Y)$.  The victim agent will try to find a policy given that its perception of the transition kernel is $Y$. 

To maximize its expected reward, the victim agent derives the optimal policy $\pi^*(\cdot|Y)$ under the posterior probability $p(X|Y)$. For a fixed delusional kernel $Y_i$, the optimal policy is chosen to be $$\pi^*(\cdot|Y_i)=\arg \min_{\pi} \mathbb{E}_{P(X|Y_i)} \|V^{\pi}_X-V^{\pi^*(X)}_X\|_\infty, $$ where the goal is to minimize the \emph{regret} in V-value compared with the optimal policy for the ground-truth transition kernel.  We define the attacked V-value as $$V^{\pi^*(\cdot|Y_i)}= \mathbb{E}_{P(X|Y_i)} [V^{\pi^*(\cdot|Y_i)}_X].$$
\begin{figure}
    \centering
    \includegraphics[scale=0.15]{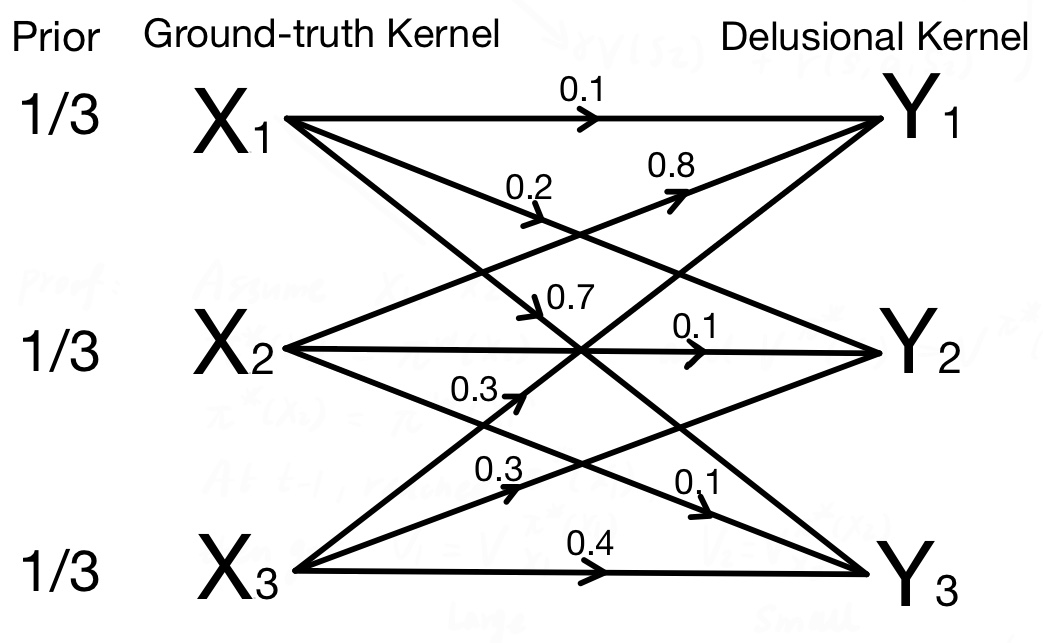}
    \caption{Rate distortion attack}
    \label{rd-attack}
\end{figure}
We now define the regret of the reward the victim agent receives due to this adversarial attack. 

\textbf{Definition:} (Regret) Let $\pi^*(X)$ denote the optimal policy the victim agent uses when the ground-truth transition kernel is $X$, and let $\pi^*(\cdot|Y)$ be the optimal policy when the agent sees the delusional kernel $Y$. We define the total regret of the victim agent as
$$R=\mathbb{E}_{p(X,Y)}\|V^{\pi^*(X)}-V^{\pi^*(\cdot|Y)}\|_{\infty}.$$\vspace{-0.1cm}The attacker's goal is to design $p(X, Y)$ such that regret $R$ is maximized, while the victim agent's goal is to minimize regret while having access to the probability distribution $p(X, Y)$ and observation $Y$.  

We propose a rate-distortion adversarial attack on the recipient agent, which we can prove to be effective no matter what defense the recipient agent uses. In this attack, the attacker solves the following optimization question: 
\begin{align}
\min_{p(X, Y)} \quad &  I(X;Y)\\
\textrm{s.t.} \quad &   \mathbb{E}_{p(X,Y)} C(X\rightarrow Y)\leq B, 
\end{align}
where $I(X;Y)$ is the mutual information between $X$ and $Y$, and $B$ is the attack budget.  

\textbf{Attack Procedures.} We describe the procedures of attackers performing rate-distortion attacks on the victim RL agents in training. The environment first samples a ground truth transition kernel $X$. The attacker then samples a delusional kernel $Y$ from a designed distribution $P(Y|X)$ before the RL agent begins training. To change the transition kernel from $X$ to $Y$, the attacker can either modify a hyper-parameter that controls the dynamics of the training environment \cite{envattack1}, design a fake kernel in the same form similar to the ground truth $X$ and replace $X$ directly with $Y$ \cite{rakhsha2020policyteachingenvironmentpoisoning}, or change state observations \cite{DBLP:journals/corr/abs-1712-03632}. Thus, the victim agent is trained in a poisoned environment with the fake transition kernel $Y$, and obtains the corresponding posterior distribution $P(X|Y)$. In model-based learning, we assume that the RL agent perceives the fake kernel $Y$ and uses planning algorithms to find the corresponding optimal policy $\pi^*(\cdot|Y)$. In model-free learning, we assume that each fixed ground-truth kernel $X$ corresponds to a different optimal policy $\pi^*(X)$, and the victim agent directly learns a fake optimal policy $\pi^*(Y)$ by a model-free algorithm such as tabular or deep Q-learning. The victim infers $P(X|Y)$ after the model-free learning from its learned optimal policy %due to the one-to-one mapping ??? between its learned optimal policy and its corresponding transition dynamics. 
The expectation over $(X, Y)$ can be seen as taking the average over many ``runs" of the whole process (each run is considered a fresh start of everything).

We emphasize the practical feasibility of the proposed rate-distortion attack on the transition kernel. Except for approaches that modify the hyper-parameters\cite{rakhsha2020policyteachingenvironmentpoisoning} or construct and directly substitute a fake kernel that mimics the true kernel\cite{envattack1}, the rate-distortion attacks can be implemented far more \emph{easily} by perturbing the victim's state observations randomly during the training. For example, consider a model-based learner that is estimating the environment model $X$. During data collection, suppose the ground-truth trajectories are in the form $(s_1,a_1,r_1,s_2,a_2,\dots,s_{T+1})$. An attacker that manipulates state perceptions can make the victim observe the corrupted trajectory $(s^d_1,a_1,r_1, s_2^d,a_2,\dots,s_{T+1}^d)$, where each $s_i^d$ denotes the delusional state perceived by the victim agent after the attack. For a finite, discrete state space $\mathcal{S}$ of cardinality $S$, there are $S!$ possible permutations between states. The attacker selects one permutation uniformly at random and applies it to each observed state. Therefore, the victim sees the fake, permuted transition kernel during the training. This simple random state observation attack therefore realizes a rate-distortion attack on the transition kernel with substantially weak requirements: The attacker neither needs to directly manipulate the transition kernel, nor needs to know the exact values of the transition kernel. Randomness comes from the random permutation of states.

In the next section, we provide an information-theoretic lower bound on the regret that the recipient agent will experience. For theoretical proofs, we assume that the victim uses model-based learning and planning algorithms to find its optimal policy $\pi^*(\cdot|Y)$.

\section{Provable lower bound on regret caused by rate-distortion attack}
In order to provide a lower bound on the regret experienced by the recipient agent, we first introduce the concept of an optimal decoder minimizing the error probability of decoding for $X$ based on a delusional observation $Y$.  

\textbf{Definition:}(Optimal decoder) Let $f:\Omega(Y)\rightarrow \Omega(X)$ be a decoder function of the ground-truth transitional probability $X$, i.e., $\hat{X}=f(Y)$ is an estimation of $X$. Let $g$ denote the optimal decoder such that
$$g=\arg\min_{f}\mathbb{E}_{p(X,Y)}\mathbbm{1}_{X\neq f(Y)}.$$
and let $Pe$ denote the error probability of the optimal decoder function $g$: $Pe = \mathbb{E}_{p(X,Y)}\mathbbm{1}_{X\neq g(Y)}.$

Then we have the following lower bound on the regret.  

\begin{theorem}\label{thm3.1}
    Assume that for each ground-truth $X$, the recipient agent has a different optimal policy $\pi^*(X)$; and there exists a positive constant $\epsilon>0$ such that 
    $\|V^{\pi^*(X)}_{X}-V^\pi_{X}\|_{\infty}>\epsilon$,  $\forall \pi\neq\pi^*(X)$. %\forall X$.
    Then the regret under the attack described by $p(X,Y)$ satisfies $$R\geq \epsilon Pe,$$
    where $Pe$ is defined as the error probability of the optimal decoder $g$. Moreover, 
    $$H(P_e)+P_e\log |\Omega(X)|\geq H(X|Y)=H(X)-I(X;Y),$$
where $H(P_e)= -P_e \log (Pe)-(1-P_e) \log(1-Pe)$ is an entropy, $H(X)$ is the entropy of $X$, and $H(X|Y)$ is the conditional entropy of $X$ given $Y$.

\end{theorem}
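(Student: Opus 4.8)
The plan is to establish the two assertions in turn: first the regret bound $R \geq \epsilon\, Pe$, and then the Fano-type inequality controlling $Pe$ by $H(X\mid Y)$. For the regret bound I would argue realization-by-realization before taking the expectation that defines $R$. Fix a ground-truth kernel $X$ and a delusional observation $Y$; the victim commits to the policy $\pi^*(\cdot\mid Y)$, which is a deterministic function of $Y$ alone, evaluated on the true dynamics $X$. Two cases arise. If $\pi^*(\cdot\mid Y)=\pi^*(X)$, the term $\|V^{\pi^*(X)}_X-V^{\pi^*(\cdot\mid Y)}_X\|_\infty$ vanishes. If instead $\pi^*(\cdot\mid Y)\neq\pi^*(X)$, then the separation hypothesis $\|V^{\pi^*(X)}_X-V^\pi_X\|_\infty>\epsilon$ for every $\pi\neq\pi^*(X)$ applies with $\pi=\pi^*(\cdot\mid Y)$, forcing a contribution exceeding $\epsilon$. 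Hence each term is bounded below by $\epsilon\,\mathbbm{1}[\pi^*(\cdot\mid Y)\neq\pi^*(X)]$, and taking the expectation over $p(X,Y)$ gives $R\geq\epsilon\,P(\pi^*(\cdot\mid Y)\neq\pi^*(X))$.

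The crucial reduction is to convert this policy-mismatch event into a decoding error. Since each ground-truth kernel is assumed to induce a \emph{distinct} optimal policy, the map $X\mapsto\pi^*(X)$ is injective, so I can define a decoder $f\colon\Omega(Y)\to\Omega(X)$ by letting $f(Y)$ be the unique kernel whose optimal policy equals $\pi^*(\cdot\mid Y)$ (and assigning any value, necessarily an error, when $\pi^*(\cdot\mid Y)$ is optimal for no kernel). By injectivity, $\pi^*(\cdot\mid Y)=\pi^*(X)$ holds exactly when $f(Y)=X$, so $P(\pi^*(\cdot\mid Y)\neq\pi^*(X))=P(f(Y)\neq X)$. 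Because $g$ is by definition the error-probability-minimizing decoder, $P(f(Y)\neq X)\geq Pe$, and combining the inequalities yields $R\geq\epsilon\,Pe$.

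For the second inequality I would recognize it as a (slightly loosened) Fano inequality for the estimator $\hat X=g(Y)$. Introduce the error indicator $E=\mathbbm{1}[X\neq g(Y)]$, note $P(E=1)=Pe$, and expand $H(E,X\mid Y)$ two ways by the chain rule; since $E$ is a deterministic function of $(X,Y)$, this gives $H(X\mid Y)=H(E\mid Y)+H(X\mid E,Y)$. Bounding $H(E\mid Y)\leq H(E)=H(Pe)$ by conditioning-reduces-entropy, and splitting $H(X\mid E,Y)$ on $E$—the event $E=0$ contributes $0$ because $X=g(Y)$ is then determined by $Y$, while the event $E=1$ contributes at most $Pe\log(|\Omega(X)|-1)\leq Pe\log|\Omega(X)|$—produces $H(X\mid Y)\leq H(Pe)+Pe\log|\Omega(X)|$, which is the stated bound. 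The concluding identity $H(X\mid Y)=H(X)-I(X;Y)$ is merely the definition of mutual information.

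I expect the main obstacle to be the reduction step in the regret bound rather than the Fano machinery. One must be careful that the induced decoder $f$ is well defined, which relies squarely on the injectivity of $X\mapsto\pi^*(X)$, and that the degenerate case where $\pi^*(\cdot\mid Y)$ matches no kernel's optimal policy can only \emph{increase} the mismatch probability, so the comparison $P(f(Y)\neq X)\geq Pe$ against the optimal decoder $g$ survives. A secondary point worth stating explicitly is the identification of $V^{\pi^*(\cdot\mid Y)}$ in the regret definition with its evaluation $V^{\pi^*(\cdot\mid Y)}_X$ under the realized true kernel, so that the separation hypothesis—stated in terms of the subscripted value functions—can be invoked directly.
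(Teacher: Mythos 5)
Your proposal is correct and follows essentially the same route as the paper: bound each regret term below by $\epsilon\,\mathbbm{1}[\pi^*(\cdot|Y)\neq\pi^*(X)]$ using the separation hypothesis, use the injectivity of $X\mapsto\pi^*(X)$ to turn the policy-mismatch event into a decoding error for an induced decoder, compare against the optimal decoder $g$ to get $R\geq\epsilon\,Pe$, and finish with Fano's inequality. The only differences are cosmetic: you prove Fano from the chain rule while the paper cites it, and your handling of the degenerate case (arbitrary assignment vs.\ the paper's nearest-value argmin) is immaterial since the inequality only needs the implication $\pi^*(\cdot|Y)=\pi^*(X)\Rightarrow f(Y)=X$, which you correctly note.
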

\begin{proof}
We construct an estimator $h:\Omega(Y)\rightarrow\Omega(X) $:
\begin{align*}
     h(Y)=\begin{cases}
        \hat{X}, \text{ if }\exists\ \hat{X}\in\Omega{(X)}\text{ such that }\pi^*(\hat{X})=\pi^*(\cdot|Y)\\
        \arg\min_{\hat{X}\in\Omega(X)}\| V^{\pi^*(\hat{X})}-V^{\pi^*(\cdot|Y)}\|_{\infty}, \text{otherwise.}
    \end{cases}
\end{align*}
With the assumption that there is a bijective relation between the ground-truth transition kernel $X$ and the optimal policy $\pi^*(X)$, we can claim that: if the decoded estimation $\hat{X}$ is not equal to the ground-truth transition kernel $X$, i.e., $\hat{X}\neq X$, then the optimal policy for the ground-truth $X$ will be different from the optimal policy $\pi^*(\cdot|Y)$, namely $\pi^*(X)\neq \pi^*(\cdot|Y)$.
%$$X\neq \hat{X}\implies\pi^*(X)\neq \pi^*(\hat{X}),$$
Therefore,
$$\mathbbm{1}_{X\neq \hat{X}}\leq \mathbbm{1}_{\pi^*(X)\neq \pi^*(\cdot|Y)},$$
where $\mathbbm{1}$ is the indicator function. 

Then we can give a lower bound of the regret $R$ as follows: 
\begin{align*}  R&=\mathbb{E}_{p(X,Y)}\|V^{\pi^*(X)}-V^{\pi^*(\cdot|Y)}\|_{\infty}\\
&\geq \mathbb{E}_{p(X,Y)}[\mathbbm{1}_{\pi^*(X)\neq \pi^*(\cdot|Y)}\epsilon]\\
&=\epsilon\mathbb{E}_{p(X,Y)}[\mathbbm{1}_{\pi^*(X)\neq \pi^*(\cdot|Y)}]\\
&\geq\epsilon\mathbb{E}_{p(X,Y)}[\mathbbm{1}_{X\neq\hat{X}}]\\
&=\epsilon\mathbb{E}_{p(X,Y)}[\mathbbm{1}_{X\neq h(Y)}]\\
&\geq\epsilon\mathbb{E}_{p(X,Y)}[\mathbbm{1}_{X\neq g(Y)}]\\
&=\epsilon Pe,
\end{align*}
where $g(\cdot)$ is the optimal decoder. 

Furthermore, we make use of Fano's inequality \cite{infobook} from information theory to specifying a lower bound on $P_e$. 
\begin{theorem}[Fano's Inequality]
     For any estimator $\hat{X}$ such that $X\rightarrow Y\rightarrow \hat{X}$, with $P_e=\text{Pr}(X\neq \hat{X})$, we have:
$$H(P_e)+P_e\log |\Omega(X)|\geq H(X|\hat{X})\geq H(X|Y).$$
This inequality can be weakened to 
$1+P_e\log|\Omega(X)|\geq H(X|Y)$, or 
$$P_e\geq \frac{H(X|Y)-1}{\log|\Omega(X)|}.$$
\end{theorem}
Using Fano's inequality together with $R \geq \epsilon P_e$, we conclude the theorem. 
\end{proof}

\section{Optimal policy $\pi^*(\cdot|Y)$ under a random kernel}
\label{headings}
In this rate-distortion adversarial attack, given a delusional transition kernel observation $Y$, the recipient agent is facing a distribution $P(X|Y)$ for the ground-truth kernel. %Different from \cite{old_mdp_random_kernel} where the uncertain kernels change for each time step (and not in our considered adversarial setting), 
The uncertain transition kernel, once realized, remains unchanged over time in our new setting and poses new research challenges. We want to highlight this is different from the Bayesian optimal value and algorithm defined in a belief MDP (BA-MDP). In belief-MDP, the optimal policy plans over both the current state and the belief state, however, in our paper we do not track the belief state because we assume that the agent's belief over transition kernels remains ``ambiguous'' during the whole training. One may wonder how to find the optimal policy $\pi^*(\cdot|Y)$ with the largest V-values under the conditional distribution $P(X|Y)$. Interestingly and counterintuitively, we find that there might not exist policies that maximize the V-values for every state simultaneously.

More specifically, for an MDP with a fixed transition kernel $P$, there exists an optimal deterministic policy $\pi^*$ \cite{sutton-book}. By its optimality, the V values associated with this policy satisfy $V^{\pi^*}(s)\geq V^{\pi}(s),\ \forall \pi\neq\pi^*,\forall s\in\mathcal{S}$. However, we find that for a random transition kernel $X$, if we define the optimal deterministic policy $\pi^*$ as the one that maximizes the expected V-values simultaneously for every $s$, i.e., 
\begin{align}
    \pi^*=\arg_{\pi}\mathbb{E}_XV_X^{\pi}(s), \forall s,\label{optimal solution}
\end{align} 
then such an optimal policy may not exist. %deterministic
\begin{theorem}\label{thm4.1}
    Consider an MDP:$(\mathcal{S},\mathcal{A},X,r,\gamma)$, where the transition kernel $X$ is random following the distribution $p(X)$. There does not always exist an optimal policy $\pi^*$ such that $\pi^*=\arg_{\pi}\mathbb{E}_XV^{\pi}_X(s),\forall s\in\mathcal{S}$.
\end{theorem}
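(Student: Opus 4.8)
The statement is a non-existence (``not always'') claim, so the plan is to exhibit a single explicit MDP with a random kernel for which no deterministic policy simultaneously maximizes $\mathbb{E}_X V^{\pi}_X(s)$ at every state. The mechanism I would exploit is that the averaged value $\bar V^{\pi}(s) := \mathbb{E}_X V^{\pi}_X(s) = \sum_i p(X_i)\, V^{\pi}_{X_i}(s)$ is a \emph{mixture} of per-kernel value functions and therefore need not obey a Bellman optimality equation; the policy-improvement argument that guarantees a pointwise-dominating policy for a \emph{fixed} kernel then breaks down. Concretely, I would engineer two starting states whose expected values are dominated by different kernels, so that the best action at a shared downstream decision state disagrees between the two starting states.

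For the construction I would use a single decision state $d$ offering two actions $a_1,a_2$, two absorbing reward states $H$ and $L$ with per-step rewards $r_H > r_L$, a neutral absorbing state $M$ with intermediate reward $r_M\in(r_L,r_H)$, and two query states $s_1,s_2$; all non-decision states are given a single (or inconsequential) action so that the only meaningful policy choice is the action taken at $d$. I would take two equally likely kernels $X_1,X_2$ (so $p(X_1)=p(X_2)=\tfrac12$) built to perform two coordinated reversals. First, the kernels swap the good action at $d$: under $X_1$ action $a_1$ leads to $H$ and $a_2$ to $L$, while under $X_2$ action $a_1$ leads to $L$ and $a_2$ to $H$. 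Second, the kernels swap which query state is routed into $d$: under $X_1$ state $s_1$ transitions to $d$ while $s_2$ transitions to $M$, and under $X_2$ state $s_2$ transitions to $d$ while $s_1$ transitions to $M$.

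I would then compute the two averaged values. Since $s_1$ reaches $d$ only in the $X_1$ branch (and lands in $M$ under $X_2$ regardless of the action), $\bar V^{\pi}(s_1)$ depends on the action at $d$ only through the $X_1$ outcome, where $a_1$ yields $H$ and $a_2$ yields $L$; hence $\bar V^{\pi}(s_1)$ is \emph{strictly} maximized by choosing $a_1$ at $d$. Symmetrically, $s_2$ reaches $d$ only in the $X_2$ branch, where $a_2$ yields $H$ and $a_1$ yields $L$, so $\bar V^{\pi}(s_2)$ is strictly maximized by choosing $a_2$ at $d$. Because the action at $d$ is the sole consequential degree of freedom, there are effectively only two deterministic policies, and neither maximizes $\bar V^{\pi}$ at both $s_1$ and $s_2$; this shows that no $\pi^*$ with $\pi^*=\arg_{\pi}\mathbb{E}_X V^{\pi}_X(s)$ for all $s\in\mathcal{S}$ exists for this MDP, which is exactly the claim. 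A short closing remark would note consistency with the fixed-kernel case: for any single $X_i$ the good action at $d$ is unambiguous, so the conflict is purely an artifact of averaging.

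The main obstacle I anticipate is aligning the two reversals so that the expectation genuinely flips sign between the two query states: if both query states reached $d$ under both kernels, the two $a_1$/$a_2$ outcomes would average symmetrically and every state would be \emph{indifferent} at $d$, yielding no strict conflict. The kernel-dependent routing is precisely what makes the effective weighting of the two kernels differ by starting state, and verifying the resulting strict inequalities -- together with checking that no alternative deterministic policy (for instance a different action at some otherwise-inconsequential state) provides an escape -- is the part that requires care rather than ingenuity.
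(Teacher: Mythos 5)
Your proposal is correct, and it proves the theorem by a genuinely different construction than the paper's. The paper uses a minimal two-state, two-action example in which the two equally likely kernels are permutations of each other (action $a$ swaps the states under $X_1$ and fixes them under $X_2$, and vice versa for $b$); it then evaluates all four deterministic policies in closed form via $V^{\pi}_{X_i}=(I-\gamma X_i^{\pi})^{-1}\bar r^{\pi}$ and reads off from a table that state $0$ prefers $\pi_2$ or $\pi_3$ while state $1$ prefers $\pi_1$ or $\pi_4$, so no policy is simultaneously optimal. Your construction instead isolates the conflict structurally: a single decision state $d$ whose good action flips between the two kernels, plus kernel-dependent routing so that $s_1$ reaches $d$ only under $X_1$ and $s_2$ only under $X_2$. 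This makes the strict inequalities $\arg\max$-at-$s_1$ $=a_1$ versus $\arg\max$-at-$s_2$ $=a_2$ visible by inspection, with no linear-system solving, and it cleanly explains \emph{why} the averaged value function fails to admit a common maximizer (different states effectively weight the kernels differently). The trade-off is size and bookkeeping: you need six states and must confirm, as you note, that the action at $d$ is the only consequential policy choice and that $\gamma>0$ so the downstream reward difference actually matters; the paper's example is smaller but its conflict is only revealed by computation rather than by design. Both are valid counterexamples and both suffice for the ``does not always exist'' claim; your closing observation that simultaneous routing of both query states into $d$ under both kernels would collapse the conflict into indifference is a correct and worthwhile sanity check that the paper does not discuss.
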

\begin{proof}
    Consider an infinite MDP with $\mathcal{S}=\{0,1\}$, $\mathcal{A}=\{a,b\}$, $\gamma=0.9$, and a random kernel $X$ with values for $X_1$ and $X_2$ work as follows:
    \begin{align*}
        X^a_1=\begin{bmatrix}
            0&1\\1&0
        \end{bmatrix},\ \hfill
        X^b_1=\begin{bmatrix}
            1&0\\0&1
        \end{bmatrix},\hfill
        X^a_2=\begin{bmatrix}
            1&0\\0&1
        \end{bmatrix},\ \hfill
        X^b_2 = \begin{bmatrix}
            0&1\\1&0
        \end{bmatrix}.
    \end{align*}
    The prior distribution of $X$ is $p(X_1)=0.5$, and $p(X_2)=0.5$.
    The reward $r(s,s')$ is shown in Table \ref{reward-two-state}.
    
    We can use an exhaustive search to find the optimal policy $\pi^*$ due to the small space in this example. The set of all possible policies is $\Pi=\{\ \pi_1=\{0:a,1:a\}, \pi_2=\{0:a,1:b\},\pi_3=\{0:b,1:a\},\pi_4=\{0:b,1:b\}\}$. To evaluate every policy $\pi\in\Pi$ with a fixed kernel $X_i$, we can apply the Bellman operator:
    \begin{align*}
        V^{\pi}_{X_i}(s)&=\sum_{s'}X_i^{\pi}[r(s,\pi(s),s')+\gamma V^{\pi}(s)]\\
        &= \Bar{r}^{\pi}(s) + \gamma\sum_{s'}X_i^{\pi} V^{\pi}(s) 
    \end{align*}
    Here in the Bellman equation, the notation $X_i^{\pi}$ is defined as $X_i^{\pi}=X_i(s'|s,\pi(s))$,$\forall s$. The notation $\Bar{r}^{\pi}(s)=\sum_{s'}X_i^{\pi}(s'|s,\pi(s))r(s,\pi(s),s')$ depends only on $s$, since $\pi(s)$ is fixed. 

    Therefore, the Bellman equation for $V^{\pi}_{X_i}$ defines a system of 2 linear equations with 2 variables, and the vector form is
    \begin{align*}
        V_{X_i}^{\pi} = \Bar{r}^{\pi} +\gamma X_i^{\pi}V_{X_i}^{\pi}.
    \end{align*}
    Solving the above equation, we get $V_{X_i}^{\pi}(s),\forall s$:
    \begin{align}\label{vector-form} 
       V_{X_i}^{\pi} = (I-\gamma X_i^{\pi})^{-1}\Bar{r}^{\pi}.
    \end{align}
    By computing the V values for each kernel $X_i$ using equation (\ref{vector-form}), we further evaluated the expected V-values $\mathbb{E}_XV^{\pi}_X(s)$ for every state and every policy.
    We see that from Table \ref{expected V} for state $0$, the optimal policy is $=\pi_2\ or \ \pi_3$, but for state $1$, the optimal policy is $\pi_1\ or \ \pi_4$. Notice that an optimal policy $\pi^*$ needs to satisfy $E_XV_X^{\pi^*}(s)>E_XV_X^{\pi}(s), \forall \pi$ for all states simultaneously. Therefore, this is a counterexample showing that there exists an MDP with a random kernel that does not have an optimal policy. Therefore, we proved Theorem $\ref{thm4.1}$.
    
\begin{table}[h]
    \centering
\centering
        \begin{tabular}{|c||c|c|}
        \hline
             $s\downarrow s'\rightarrow$&0&1  \\
             \hline
             0&0.06&0.15\\
             \hline
             1&0.3&0.95\\
             \hline
        \end{tabular}
        \caption{Reward function of the two-state example}\label{reward-two-state}
            \centering
        \begin{tabular}{|c||c|c|c|c|}
            \hline
             state $\downarrow$ policy$\rightarrow$& $\pi_1$&$\pi_2$&$\pi_3$&$\pi_4$ \\
             \hline
             0&1.41&4.65&4.65&1.41 \\
             \hline
             1&5.89&5.17&5.17&5.89\\
             \hline
        \end{tabular}
    \caption{Expected reward over transition kernels: $\mathbb{E}_XV^{\pi}_X(s)$}
        \label{expected V}
\end{table}

\textbf{Non-existence of the optimal \emph{random} policy}: To further investigate whether an optimal \emph{random} policy $\pi^*$ exists under the definition (\ref{optimal solution}), which is, $$\pi^*=\arg_{\pi}\mathbb{E}_XV_X^{\pi}(s),\forall s.$$ 
Now the optimal policy is in the form $\pi^*:\mathcal{S}\times\mathcal{A}\rightarrow [0,1]$. We conduct a numerical experiment under the same setting as described in the above proof.

The results indicate that there \emph{does not} always exist an optimal random policy $\pi^*$ satisfying $\pi^*=\arg_{\pi}\mathbb{E}_XV^{\pi}_X(s),\forall s\in\mathcal{S}$. To visualize this, we make a contour plot over the two probabilistic parameters $\theta_0$ and $\theta_1$, where $\theta_0=\pi(a|0)$ denotes the probability of taking action $a$ in state $0$, and $\theta_1=\pi(a|1)$ denotes the probability of taking action $a$ in state $1$. Consequently, the probabilities of taking action $b$ are $1-\theta_0$ and $1-\theta_1$ in state $0$ and $1$, respectively.

The two subplots in Figure \ref{fig:non-exist-random-policy} illustrates the expected state value surfaces $\mathbb{E}_X V_X^{\pi}(0)$ and $\mathbb{E}_X V_X^{\pi}(1)$ as functions of $(\theta_0, \theta_1)$. In both subplots, the lighter the color is, the larger the expected state value is. We evaluate four discrete priors over the two possible transition kernels, including $p=\{[0,1],[1,0],[0.3,0.7],[0.5,0.5]\}$, and we mark the optimal random policies for each state with red markers in both subplots. The contour plots corresponds to the uniform prior $p=[0.5,0.5]$. In this case, the figure shows that the optimal value for state $0$ and $1$ are at two different policies which are governed by $(\theta_0,\theta_1)=(1,0.37)$ for state $0$ and $(\theta_0, \theta_1)=(0,0)$ for state $1$. There does not exist a common policy that optimizes the expected state value of state $0$ and $1$ simultaneously by looking at the trends of the contours. An optimal random policy exists only when $\pi^*(\cdot|0)=\pi^*(\cdot|1)$, for any action. In other words, the red markers in both subplots with the same prior label should be at the same position. Therefore, there does not exist an optimal random policy under (\ref{optimal solution})

We can observe that (\ref{optimal solution}) only holds in the two deterministic cases for $p=[0,1]$ and $p=[1,0]$. When $p=[0.3,0.7]$ and $p=[0.5,0.5]$, the optimal random policy does not exist.  
\begin{figure}
    \centering
    \includegraphics[scale=0.5]{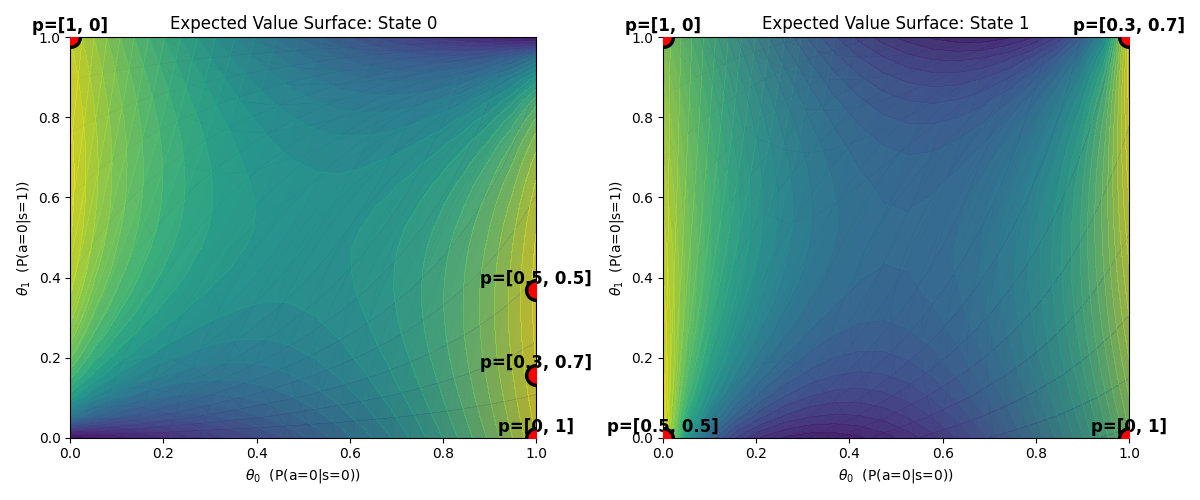}
    \caption{Non-existence of the optimal random policy}
    \label{fig:non-exist-random-policy}
\end{figure}
\end{proof}

\section{Planning with random kernels}
When given a delusional transition kernel $Y$, one is faced with the task of finding an optimal policy $\pi^*(\cdot| Y)$ such that the policy provides the smallest regret in V values, namely finding a policy $\pi$ minimizing 
 $$\mathbb{E}_{P(X|Y)} \|V^{\pi}_X-V^{\pi^*(X)}_X\|_{\infty} .$$

We therefore propose a new policy iteration algorithm described in Algorithm \ref{policy evaluation} and \ref{policy iteration} to find such a policy. In this policy iteration algorithm, in each iteration, we find whether there is an action update that improves the expected V-values over the distribution of the transition kernels and if there is, we update the action to that new action.

In practice, we find that this policy iteration algorithm works well and can find the optimal or near-optimal policy with high probability. However, counterintuitively, different from the usual policy iteration algorithm which provably converges to the optimal policy under a fixed transition kernel,  we can show that this natural extension of policy iteration does not always converge to the optimal policy under random kernels.  

In fact, we find that even if an MDP with a random kernel $X$ indeed has an optimal policy $\pi^*$ as defined in (\ref{optimal solution}), the policy iteration algorithm, which is a commonly used algorithm for planning, does not always converge to the optimal solution $\pi^*$ under random transition kernels.

\begin{theorem}\label{thm5.1}
    If a given MDP represented by $(\mathcal{S},\mathcal{A},X,r,\gamma)$, with a random transition kernel $X$ distributed from $p(X)$, has an optimal policy $\pi^*=\arg_{\pi}\mathbb{E}_{X}V^{\pi}_X$, the extended policy iteration algorithm \ref{policy evaluation} and \ref{policy iteration} does not always converge to $\pi^*$. 
\end{theorem}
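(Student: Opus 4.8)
The plan is to establish this non-convergence result by an explicit counterexample, in the same spirit as the proof of Theorem~\ref{thm4.1}. I would exhibit a small MDP with a random kernel that genuinely admits an optimal policy $\pi^*$ in the sense of~(\ref{optimal solution}) --- so that the hypothesis of the theorem is satisfied --- yet on which the extended policy iteration of Algorithms~\ref{policy evaluation} and~\ref{policy iteration}, started from a suitable initial policy, halts at a strictly suboptimal policy. The natural testbed is once more a two-state, two-action MDP with $\mathcal{S}=\{0,1\}$, $\mathcal{A}=\{a,b\}$, and a two-valued random kernel $X\in\{X_1,X_2\}$ with a prior $p$: its policy space $\Pi=\{\pi_1,\pi_2,\pi_3,\pi_4\}$ is small enough that every expected V-value $\mathbb{E}_X V^\pi_X(s)$ can be computed in closed form via~(\ref{vector-form}) and the trajectory of the algorithm can be traced by hand.

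First I would choose the reward entries and the two kernel matrices so that the resulting expected-value table (the analogue of Table~\ref{expected V}) contains a single policy $\pi^*$ that strictly dominates all others at \emph{both} states simultaneously; this is precisely what fails in the example of Theorem~\ref{thm4.1} and is what makes the present hypothesis hold. Second, I would tune the same table so that a different policy $\hat\pi\neq\pi^*$ is a local optimum for the single-action-swap dynamics of the algorithm. Recall that under this neighborhood the four policies form a $4$-cycle, each $\pi_i$ having exactly two neighbors (flip the action at state $0$, or at state $1$); hence if $\hat\pi$ is to trap the search it must be the \emph{diagonal} opposite of $\pi^*$, so that neither of its two neighbors offers an improving single-state action swap. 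Initializing the algorithm at $\hat\pi$, I would then verify that the policy improvement step finds no beneficial single-state update and the algorithm terminates at $\hat\pi\neq\pi^*$, which is the desired failure.

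The conceptual reason such a configuration can exist --- and the point I would stress --- is that the policy improvement theorem underlying ordinary policy iteration breaks down once we average over the random kernel. For a fixed kernel, a greedy single-state swap that raises the local Q-value provably raises $V^\pi(s)$ at every state, because the Bellman operator is monotone and the improvement propagates globally. Under a random kernel, the quantity the extended algorithm improves is $\mathbb{E}_X[\cdot]$ of a value function that itself depends on $X$, and neither the $\max$ nor the $\arg\max$ commutes with this expectation. Consequently the monotone-improvement guarantee is lost: a local swap can raise the expected value at one state while lowering it at another, and more to the point, every single-state swap out of $\hat\pi$ can look non-improving even though the simultaneous two-state move to the diagonal policy $\pi^*$ is strictly better at both states. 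This decoupling is exactly what permits a spurious local optimum $\hat\pi$ to coexist with a genuine global optimum $\pi^*$.

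The main obstacle will be the simultaneous numerical tuning. I must place the four points $\big(\mathbb{E}_X V^\pi_X(0),\,\mathbb{E}_X V^\pi_X(1)\big)$, one per policy, so that $\pi^*$ dominates all three others componentwise while $\hat\pi$, its diagonal opposite, is a strict local optimum under the neighbor-swap graph. These two requirements constrain the admissible geometry of the four points fairly tightly, so I expect to search the reward and kernel parameters --- evaluating each candidate through the closed form~(\ref{vector-form}) --- until both hold, and then report the explicit instance together with its expected-value table as the proof, mirroring the presentation used for Theorem~\ref{thm4.1}.
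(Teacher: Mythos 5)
Your overall strategy---exhibit an explicit MDP with a random kernel for which $\pi^*=\arg_\pi\mathbb{E}_X V^\pi_X(s),\forall s$ exists, then show the iteration of Algorithms \ref{policy evaluation} and \ref{policy iteration} stalls at some $\hat\pi\neq\pi^*$ from a suitable initialization---is exactly the paper's approach; the paper uses a $3$-state, $3$-action MDP with two permutation-related kernels and initializes at $\pi_0=\pi^*(X_1)$, whereas you propose hunting for a $2$-state, $2$-action instance. Your conceptual diagnosis (the policy-improvement guarantee relies on monotonicity of the Bellman operator for a \emph{fixed} kernel, and averaging over $X$ destroys it) is also the right one.

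However, there is a genuine flaw in how you characterize the condition under which the algorithm gets trapped, and it would misdirect your search and potentially invalidate your verification step. Algorithm \ref{policy iteration} is not a local search over a single-action-swap neighborhood judged by expected V-values: its fixed-point condition is that, for every state $s$, the current action $\pi_k(s)$ already attains $\arg\max_a\sum_{X_i}p(X_i)Q^{\pi_k}_{X_i}(s,a)$, where $Q^{\pi_k}_{X_i}$ is the Q-function of the \emph{current} policy under each kernel. This is a statement about kernel-averaged Q-values of $\pi_k$, not about whether any neighboring policy $\pi'$ has larger $\mathbb{E}_X V^{\pi'}_X(s)$. Consequently your claim that the trapping policy $\hat\pi$ ``must be the diagonal opposite of $\pi^*$'' is false: a policy can be a fixed point of the greedy averaged-Q update even when a single-state swap strictly improves the expected value at every state. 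Indeed, in the paper's own counterexample the stalled policy $\pi^*(X_1)=\{0{:}\,left,\ 1{:}\,right,\ 2{:}\,stay\}$ differs from the optimum $\{0{:}\,left,\ 1{:}\,left,\ 2{:}\,stay\}$ at exactly one state, i.e., it is a single-swap neighbor of $\pi^*$, which your geometric constraint would have excluded a priori. To make your proof sound, drop the neighbor-swap framing entirely: tune your instance so that (i) $\pi^*$ componentwise dominates all policies in $\mathbb{E}_X V^\pi_X$, and (ii) at $\hat\pi$ the table of averaged Q-values $\tfrac12 Q^{\hat\pi}_{X_1}(s,a)+\tfrac12 Q^{\hat\pi}_{X_2}(s,a)$ (the analogue of the paper's Table of $Q^{\pi_0}(s,a)$) is row-maximized by $\hat\pi(s)$ itself, so that $\pi_{k+1}=\pi_k=\hat\pi$ for all $k$ and the algorithm returns $\hat\pi\neq\pi^*$. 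Verifying (ii), not ``no improving swap,'' is what the proof requires.
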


\begin{algorithm}
  \caption{Policy Evaluation}\label{policy evaluation}
    \textbf{Input}: $\mathcal{S}, \mathcal{A},\Omega(X),r,\gamma,\pi$
  \begin{algorithmic}[1]
  \For{$X_i\in\Omega(X)$} 
  \State Repeat
  \State $\Delta_{X_i}\gets 0$
\For{$s\in\mathcal{S}$}
\State $v_{X_i}\gets V_{X_i}^{\pi}(s)$
        \State $V_{X_i}^{\pi}(s)\gets\sum_{s'}X_i(s'|s,\pi(s))[r(s,\pi(s),s')+\gamma V^{\pi}_{X_i}(s')]$
        \State $\Delta_{X_i} \gets\max(v_{X_i}-V^{\pi}_{X_i}(s))$
 \EndFor 
 \State until $\Delta_{X_i} < \theta$\Comment{$\theta$ is a small threshold}
 \EndFor
 \State \Return $V^{\pi}_{X_i}, X_i\in\Omega(X_i)$
  \end{algorithmic}
\end{algorithm}
%policy update
\begin{algorithm}
\caption{Policy iteration with a random kernel}\label{policy iteration}
\textbf{Input}: $\mathcal{S}, \mathcal{A},V_{X_i}^{\pi},\Omega(X),r,\gamma,\pi,p,n$
    \begin{algorithmic}[1]
    
    \For{$k$ from $1$ to $n-1$}
    \For{$s\in\mathcal{S}$, }
    \State$\pi_{k+1}(s)=\arg_a\sum_{X_i}p(X_i)(\sum_{s'}X_i(s'|s,a)\cdot$\\
        $\hspace{11em}[r(s,a,s')+\gamma V^{\pi_k}_{X_i}(s')])$
    \EndFor
    \State $V^{k+1}_{X_i}=$PolicyEvaluation$(\pi_{k+1})$
    \EndFor
    \State return $\pi_n$
    \end{algorithmic}
\end{algorithm}

\begin{proof}
  Consider a 3-state, 3-action MDP. The state space is $\mathcal{S}=\{0,1,2\}$, where the states are in a circle. The action space is $\mathcal{A}=\{left, \ right, stay\}$, with the discount factor $\gamma=0.9$. The ground-truth transition probability $X$ takes two possible values $\{X_1, X_2\}$ with the prior distribution $p(X_1)=p(X_2)=0.5$. Tables \ref{numerical-X1} and \ref{numerical-X2} give the transition kernels $X_1$ and $X_2$ with respect to every action in $\mathcal{A}$. The reward function $r(s,s')$ is displayed in Table \ref{reward-table}:
\begin{table}[h]
    \centering
            \centering
            \begin{tabular}{|c|c|c|}
                \hline
                0 & 0 & 1 \\
                \hline
                1 & 0 & 0 \\
                \hline
                0 & 1 & 0 \\
                \hline
            \end{tabular}
            \begin{tabular}{|c|c|c|}
                \hline
                0 & 1 & 0 \\
                \hline
                0 & 0 & 1 \\
                \hline
                1 & 0 & 0 \\
                \hline
            \end{tabular}
            \begin{tabular}{|c|c|c|}
                \hline
                1 & 0 & 0 \\
                \hline
                0 & 1 & 0 \\
                \hline
                0 & 0 & 1 \\
                \hline
            \end{tabular}
            \caption{$X_{1}^{left},\ X_1^{right},\ X_1^{stay}$}
            \label{numerical-X1}
        \end{table}
        \begin{table}[h]
            \centering
            \begin{tabular}{|c|c|c|}
                \hline
                0 & 1 & 0 \\
                \hline
                0 & 0 & 1 \\
                \hline
                1 & 0 & 0 \\
                \hline
            \end{tabular}
            \begin{tabular}{|c|c|c|}
                \hline
                1 & 0 & 0 \\
                \hline
                0 & 1 & 0 \\
                \hline
                0 & 0 & 1 \\
                \hline
            \end{tabular}
            \begin{tabular}{|c|c|c|}
                \hline
                0 & 0 & 1 \\
                \hline
                1 & 0 & 0 \\
                \hline
                0 & 1 & 0 \\
                \hline
            \end{tabular}
            \caption{$X_{2}^{left},\ X_2^{right},\ X_2^{stay}$}
            \label{numerical-X2}
\end{table}
\begin{table}[h]
            \centering
            \begin{tabular}{|c||c|c|c|}
                \hline
                $s \downarrow \ s' \rightarrow$ & 0 & 1 & 2 \\
                \hline
                0 & 0.06 & 0.1 & 0.15 \\
                \hline
                1 & 0.02 & 0.1 & 0.15 \\
                \hline
                2 & 0.01 & 0.2 & 0.95 \\
                \hline
            \end{tabular}
            \caption{Reward setting of the planning}
            \label{reward-table}
\end{table}

This MDP with a random kernel has an optimal solution $\pi^*(\cdot|Y)=\{0:left, 1:left, 2:stay\}$, found by exhaustive search. The optimal solution associated with the fixed transition kernel $X_1$ is $\{0:left, 1:right, 2:stay\}$, and the optimal solution associated with the fixed kernel $X_2$ is $\{0:stay, 1:left, 2:right\}$. Notice that optimal solution $\pi^*(\cdot|Y)$ is neither equal to the optimal policy associated with the fixed $X_1$, nor $X_2$, i.e., $\pi^{*}(\cdot|Y)\neq \pi^*(X_1)$, and $\pi^*(\cdot|Y)\neq\pi^*(X_2)$.

    Notice that in our setting, the three probability kernels in $X_2$ associated with actions are the permutation of probability kernels in $X_1$, i.e., $X_2^{left}=X_{1}^{right}$, $X_2^{right}=X_{1}^{stay}$, and $X_2^{stay}=X_{1}^{left}$. Let $r_{X_i}(s,a)=\sum_{s'}X_i(s'|s,a)r(s,a,s')$ denote the immediate reward gained under the fixed $X_i$. We have that the largest immediate reward gained under $X_1$ is the same as the largest immediate reward gained under $X_2$, $\max_{a\in\mathcal{A}} r_{X_1}(s,a)=\max_{a\in\mathcal{A}} r_{X_2}(s,a),\forall s$.

    We claim that with the prior probability $p(X_1)=p(X_2)=1/2$, and with the initial policy $\pi_0=\pi^*(X_1)$, Algorithm \ref{policy iteration} fails to converge to $\pi^*$.
    
    Starting with $\pi_0=\pi^*(X_1)$, then by policy evaluation, we have that the updated V-values with fixed kernel $X_1$ by following $\pi_0$ is $V^{\pi_0}_{X_1}=\{0:7.858, 1:7.858, 2:8.658\}$, i.e., $V^{\pi_0}_{X_1}(0)=7.858$, $V^{\pi_0}_{X_1}(1)=7.858$ and $V^{\pi_0}_{X_1}(2)=8.658$. The V-value with fixed kernel $X_2$ following $\pi_0$ is updated to
    $V^{\pi_0}_{X_2}=\{0:0.911, 1:0.911, 2:1.02\}$. Notice that $V^{\pi_0}_{X_1}(s)\geq V^{\pi_0}_{X_2}(s), \forall s$.
    The extended policy iteration updates $\pi_1(s)$ such that:
    \begin{align*} \pi_1(s)&=\arg\max_{a}1/2\sum_{s'}X_1(s'|s,a)[r(s,a,s')+\gamma V^{\pi_0}_{X_1}(s')] 
    + 1/2\sum_{s'}X_2(s'|s,a)[r(s,a,s')+\gamma V^{\pi_0}_{X_2}(s')]\\
    &=\arg\max_a 1/2(r_{X_1}(s,a) + \gamma\sum_{s'} X_1(s'|s,a)V^{\pi_0}_{X_1}(s'))
     1/2(r_{X_2}(s,a) + \gamma\sum_{s'} X_2(s'|s,a)V^{\pi_0}_{X_2}(s'))\\
    &=\arg\max_{a} 1/2 Q^{\pi_0}_{X_1}(s,a) + 1/2Q^{\pi_0}_{X_2}(s,a)\\
    \end{align*}
    The extended Q-values $Q^{\pi_0}(s,a)=1/2Q^{\pi_0}_{X_1}(s,a) + 1/2Q^{\pi_0}_{2}(s,a)$ are shown in Table \ref{Q-values}.
    Since for all states $\{0,1,2\}$, we have that $\pi_1(0)=\arg\max_aQ^{\pi_0}(0,a)=0, \pi_1(1)=\arg\max_aQ^{\pi_0}(1,a)=1,$ and $\pi_1(2)=\arg\max_aQ^{\pi_0}(2,a)=2$. The updated policy has no change: $\pi_1=\pi_0=\pi^*(X_1)$. Therefore, the evaluations of $V^{\pi_1}_{X_i}(s)$ stay the same as $V^{\pi_0}_{X_i}(s),\forall s$.
    The update of the following sequence of policies $\{\pi_2,\pi_3,\dots,\pi_n\}$ repeats the above process, and are all equal to the initial policy, i.e. $\pi_n=\pi_0=\pi^*(X_1)$. Therefore, the algorithm \ref{policy evaluation} and \ref{policy iteration} fails to converge to $\pi^*(\cdot|Y)$.

\begin{table}[h]
        \centering
        \begin{tabular}{|c||c|c|c|}
        \hline
            $s\downarrow a\rightarrow$ & left&right&stay \\
            \hline
             0&4.43&4.02&4.10\\
             \hline
             1&4.08&4.43&4.01\\
             \hline
             2&4.05&4.47&4.88\\
             \hline
        \end{tabular}
        \caption{$Q^{\pi_0}(s,a)$}\label{Q-values}
        \label{tab:my_label}
    \end{table} 
\end{proof}

\section{Random  attacks on model-free RL, and extensions to random state observation attacks}
Previous sections address the random ``invincible'' attacks in planning. We want to generalize the scenario of applying the random ``invincible adversarial attack'' on model-free algorithms such as tabular and deep Q-learning (DQN). 

In Section \ref{sec:formulation}, we mentioned that the rate-distortion attack on transition kernels can be performed by randomly changing state observations. In case of a finite discrete transition kernel, the attacker can randomly map the ground-truth state to the delusional state by choosing a random permutation. When the state space is large or continuous and the transition kernel is implicitly deterministic, e.g. in Atari games or Cartpole games, the attacker can also perform the information-theoretic random state observation attacks. Specifically, let $\delta \sim \Omega(\delta)$ be a random perturbation sampled from the designed distribution $p(\delta)$. At each iteration within every episode, the attackers perturb the agent’s true state observation by adding the perturbation $\delta$. This perturbation $\delta$ remains constant throughout the episode once it is sampled from $p(\delta)$. Notice that the perturbed state observation is random due to the randomness of $\delta$. The victim agent then finds a false policy given the perturbed observation in learning.

\section{Numerical Analysis}
\subsection*{Regret analysis of rate-distortion attacks on planning with $I(X;Y)=0$.}\label{numerical:01} 
We first describe an effective rate-distortion attack in the planning. We consider the same MDP setting as in the proof of Theorem \ref{thm5.1}.

In terms of attack setting, we let the sample space of the delusional transition kernel $\Omega(Y)$ be the same as $\Omega(X)$ for the sake of presentation. The likelihoods $P(Y|X)$ that describe the probability that the victim observes the delusional matrix $Y$ given the ground truth $X$, are given in Table \ref{rate-distortion-attack}.

\begin{table}[h]
\centering
    \begin{minipage}{0.4\columnwidth}  % narrower minipage
        \begin{tabular}{|c||c|c|}
            \hline
            $X \downarrow Y \rightarrow$ & $X_1$ & $X_2$ \\
            \hline
            $X_1$ & $\ 1/2\ $ & $\ 1/2$ \ \\
            \hline
            $X_2$ & $\ 1/2\ $ & $\ 1/2\ $ \\
            \hline
        \end{tabular}
        
        \caption{Attack parameters $P(Y|X)$}
        \label{rate-distortion-attack}
    \end{minipage}%
    \hspace{0.05\columnwidth} 
    \begin{minipage}{0.4\columnwidth}  % narrower minipage
        
        \begin{tabular}{|c||c|c|}
            \hline
            $X \downarrow Y \rightarrow$ & $X_1$ & $X_2$ \\
            \hline
            $X_1$ & $1 - p_1$ & $p_1$ \\
            \hline
            $X_2$ & $p_2$ & $1 - p_2$ \\
            \hline
        \end{tabular}
        
        \caption{Minimum mutual information attack}
        \label{mi-attack-table}
    \end{minipage}
\end{table}
With the prior probability $p(X_1)=p(X_2)=1/2$, the posterior probabilities $p(X|Y)$ perceived by the victim agent are: $p(X=X_i|Y=X_j)=1/2,\forall i,j\in\{1,2\}$.

We compute the optimal policy $\pi^*(\cdot|Y)$ using this specific-designed rate-distortion attack with an exhaustive search of all possible policies. The results show that the optimal policy $\pi^*(\cdot|Y)$ for the recipient agent is a ``intermediate policy'' between the two optimal policies: $\pi^*(X_1)$ and $\pi^*(X_2)$. The optimal policy under the rate distortion attack is $\pi^*(\cdot|Y_1)=\pi^*(\cdot|Y_2)=\{0:left, 1:left, 2:stay\}$, which means $\pi^*(\cdot|Y_i)(0)=left, \pi^*(\cdot|Y_i)(1)=left$ and $\pi^*(\cdot|Y_i)(2)=stay, i=1,2$. 

For fixed kernels, $\pi^*(X_1)=\{0:left, 1:right, 2:stay\}$, while $\pi^*(X_2)=\{0:stay, 1:left, 2:right\}$. We compute regret $R=\mathbb{E}_{p(X,Y)}\|V^{\pi^*(X)}_X-V^{\pi^*(\cdot|Y)}_X\|_{\infty}$ and it turns out that $R = 3.84$. The regret takes approximately $44.3\%$ of the real optimal V-value $\mathbb{E}_{p(X)}\|V^{\pi^*(X)}_X\|_{\infty}$ without attack.

\subsection*{Optimal Budget constrained attack maximizing the regret}\vspace{-0.25cm}
We extend our numerical results for rate-distortion attacks when attackers are limited by the attack budget $B$. Assume that there is no cost for the attackers to confuse the agent to perceive $X_1$ as $X_1$, or $X_2$ as $X_2$. The budget is spent on confusing agents to perceive $X_1$ as $X_2$, and to perceive $X_2$ as $X_1$. We denote the probability $p_1 = P(Y=X_2|X=X_1)$, which is the likelihood that the ground-truth kernel $X_1$ leads to a delusional kernel $X_2$. Similarly, $p_2 = P(Y=X_1|X=X_2)$ is the likelihood that the ground-truth kernel $X_2$ leads to a delusional kernel $X_1$. The attacker's goal is to design $p_1$ and $p_2$ such that regret is maximized while not exceeding budget $B$.

Let the cost of achieving the transition $X_1 \rightarrow X_2$ be $p_1c_1,\  c_1=1.5$, and let the cost of achieving the transition $X_2 \rightarrow X_1$  be $p_2c_2,\ c_2 = 2$. We require the average cost to be less than budget B, namely, $0.5p_1c_1+0.5p_2c_2<B$.

With a sequence of budgets, we record the maximum regret caused by the most damaging attack (optimizing $p_1$ and $p_2$) with the limit of every budget in Figure \ref{fig:budget-constrained-regret}. As shown in Figure \ref{fig:budget-constrained-regret}, the regret keeps increasing as the budget increases. When the budget reaches $0.711$, the regret reaches its maximum for the specific MDP environment, which is about $44.3\%$ of the $\ell_\infty$ norm of the unattacked V-value.

\begin{figure}[htbp]
    \centering
    \begin{minipage}[t]{0.49\columnwidth}
        \centering
        \includegraphics[width=\linewidth]{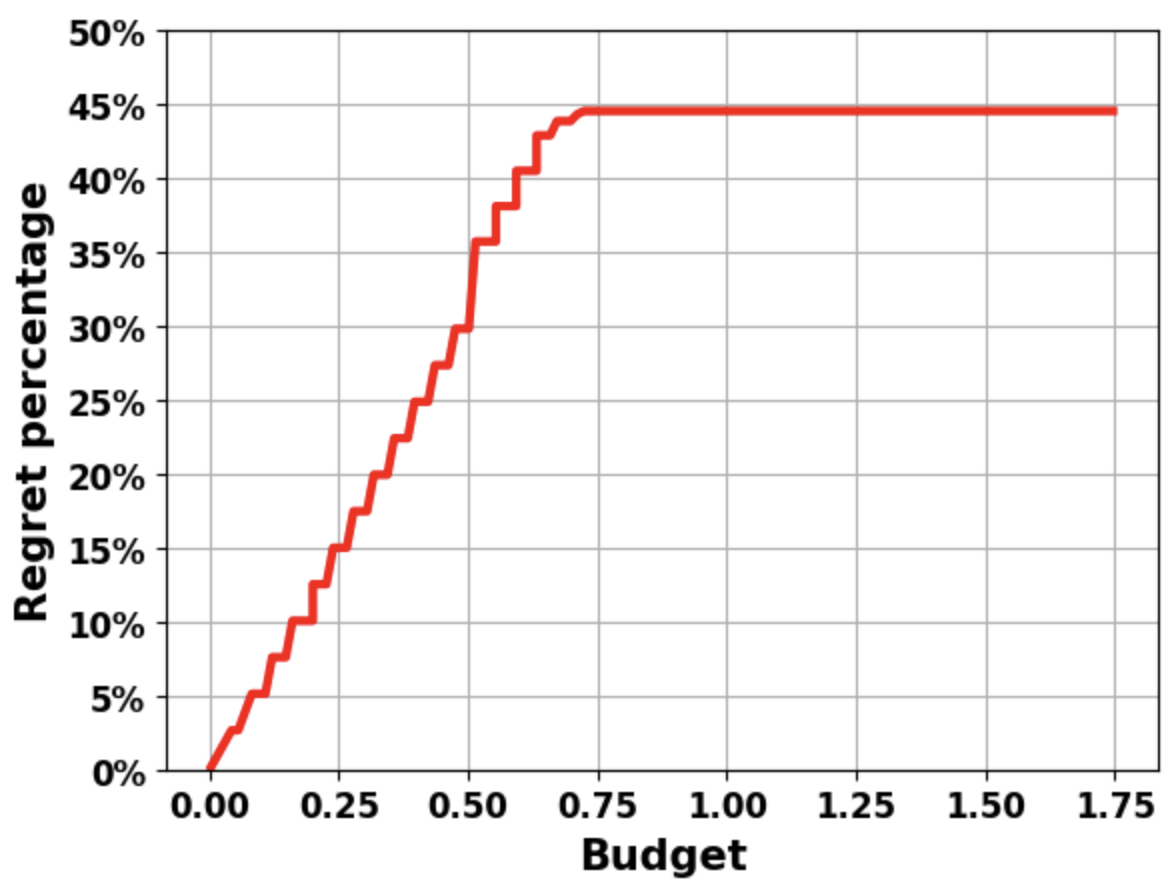}
        \caption{Budget constrained optimal attack maximizing regret}
        \label{fig:budget-constrained-regret}
    \end{minipage}
    \hfill
    \begin{minipage}[t]{0.49\columnwidth}
        \centering
        \includegraphics[width=\linewidth]{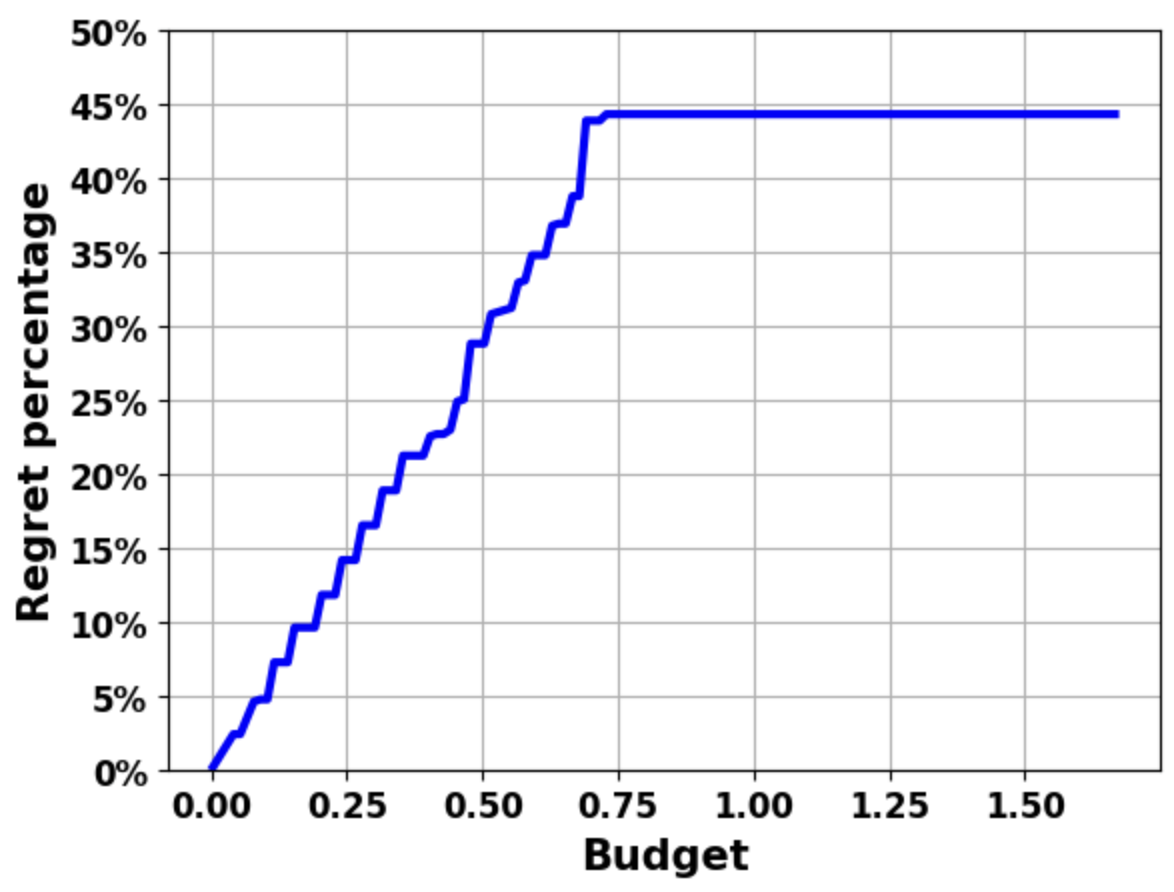}
        \caption{Minimized mutual information attack under budget}
        \label{fig:MI-attack}
    \end{minipage}
\end{figure}

\subsection*{Rate-distortion attacks via minimizing mutual information}
In this experiment, we consider our proposed rate distortion attack by directly minimizing mutual information between the ground-truth kernel and the delusional transition kernel. The mutual information attack prevents the victim agent from gaining information from the observed delusional transition about the ground-truth transitional kernel $X$ as much as possible. 

By optimizing the parameters $(p_1,p_2)$ for $P(Y|X)$ displayed in Table \ref{mi-attack-table}, we can build a minimum mutual information attack. In Figure \ref{MI}, under the budget $B$, we calculate the minimum mutual information:$\min I(X; Y)=\min_{p_1,p_2} [-0.5*(1+p_2-p_1)*\log(0.5+0.5*(p_2-p_1))-0.5*(1+p_1-p_2)*\log(0.5+0.5*(p_1-p_2))+0.5*( p_1*\log(p_1)+(1-p_1)*\log(1-p_1) )+0.5*( p_2*\log(p_2)+(1-p_2)*\log(1-p_2) )]$
 , under the constraint that $0.5p_1c_1+0.5p_2c_2<B.$ 

The effect of the minimum mutual information attack is shown in Figure \ref{fig:MI-attack}. We see that when the budget reaches $0.7285$, the regret reaches its maximum $44.3\%$.
\begin{figure}[h!]
    \centering
    %\vspace{-0.4cm}
    \begin{minipage}[t]{0.5\textwidth}
        \centering
        \includegraphics[width=7cm,height=5.7cm]{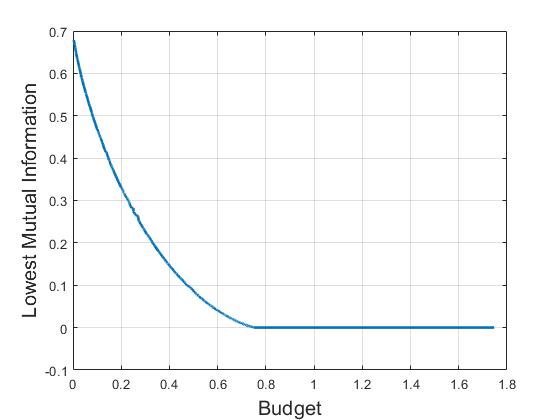}
    \caption{Minimum mutual information given budget}
    \label{MI}
    \end{minipage}%
    \hfill
    \begin{minipage}[t]{0.5\textwidth}
        \centering
    \includegraphics[width=7cm,height=5.5cm]{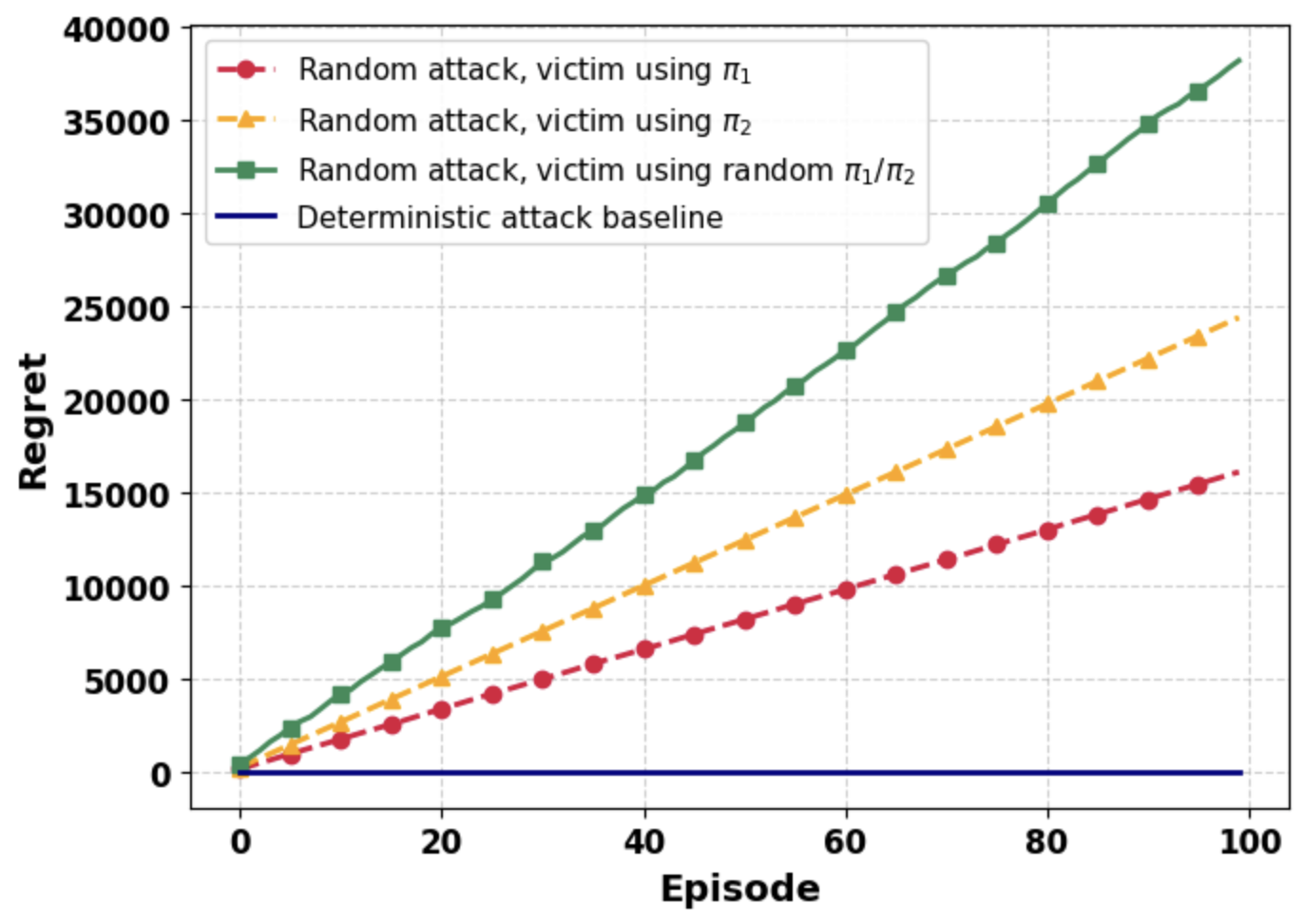}
    \caption{Regret of rate-distortion attacks on DQN}
    \label{DQN-attack}
    \end{minipage}
\end{figure}
\subsection*{Rate-distortion attacks on model-free RLs}
\textbf{(1) Rate-distortion attacks on tabular Q-learning in Block-world}: We tested the effect of random attacks on transition functions in the block world environment \cite{RusselNorvig,qlearningblockworld}. The state space of the block world environment is shown in Figure \ref{blockworld}. The settings of the Blockworld environment are from \cite{qlearningblockworld}, where the state space is a grid consisting of 12 distinct states. The green and red spots are terminal states: the green state represents a win with a reward of $+1$, while the red state represents a loss with a reward of $-1$. Each time the agent takes an action, it incurs a small penalty of $-0.04$. The black cell represents a wall. If the agent attempts to move into the wall or outside the grid boundaries, it will remain in its current position. The action space consists of \{east, west, north, south\}.
\begin{figure}[h!]
    \centering
    \begin{minipage}[t]{0.5\textwidth}
        \centering
        \includegraphics[width=7cm,height=4.5cm]{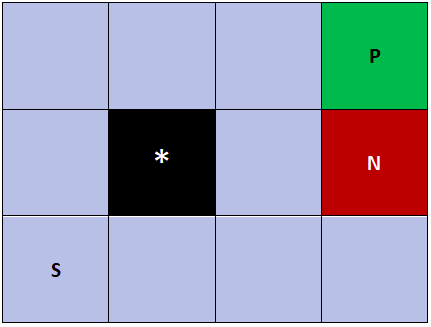}
        \caption{Block world environment}
        \label{blockworld}
    \end{minipage}%
    \hfill
    \begin{minipage}[t]{0.5\textwidth}
        \centering
        \includegraphics[width=7cm,height=4.5cm]{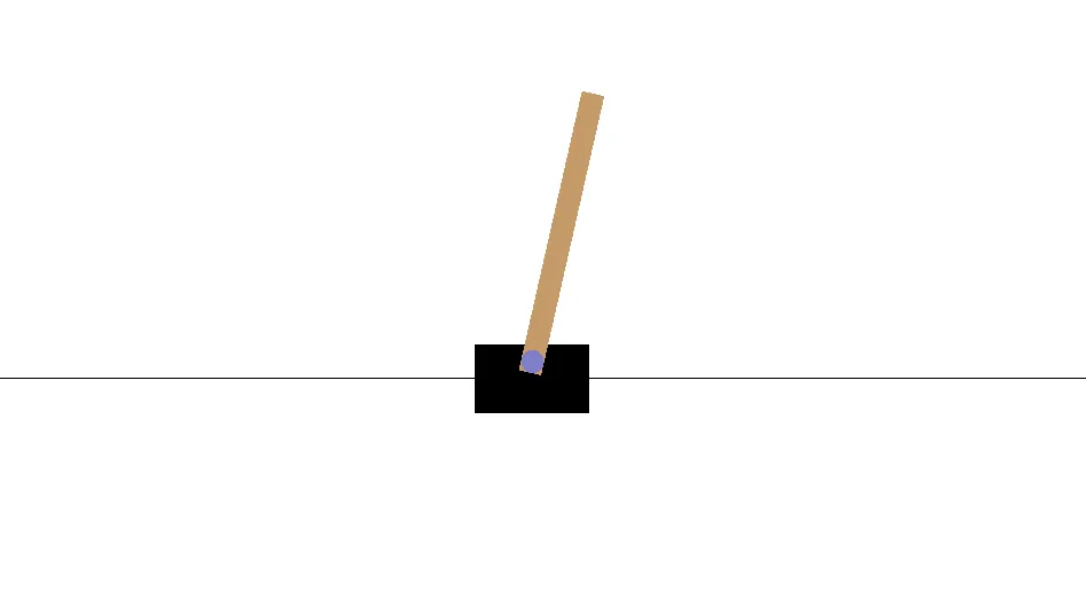}
        \caption{Cartpole}
        \label{cartpole}
    \end{minipage}
\end{figure}
We provide a detailed discussion of the random kernel settings and the adversarial attack on the block world environment.

\emph{Changing hyper-parameters}: The attackers create a fake training environment dynamics by changing the hyper-parameter ``slip probability'' $\alpha$. In block-world, when the RL agent chooses an action, there is a probability $\alpha$ that the agent will slip and end up in a different state. Specifically, if the agent slips, it may move in a direction orthogonal to the target direction. For example, if $\alpha= 0.8$ and the agent intends to go east, it will actually go east with probability $0.8$, go north with probability $0.1$, and go south with probability $0.1$. The slip probability $\alpha$ implicitly determines the state transitions in the block-world environment. We assume that $\alpha$ is random from a prior distribution where $p(\alpha=0.8)=p(\alpha=0.2)=1/2$. Let $X_1$ denote the transition kernel corresponding to $\alpha=0.8$, and $X_2$ the kernel for $\alpha=0.2$. To provide minimum information, we inherit the designed rate distortion attack parameters $p(X=X_i|Y=X_j)=1/2,i,j=1,2$. %Now the fixed transition kernel $X_1$ ($Y_1$) corresponds to the slip probability $0.8$ and $X_2$ ($Y_2$) corresponds to the slip probability $0.6$. 

The regret of every starting state caused by the rate-distortion information-theoretic attack in the test is displayed in Figure \ref{TQ-learning}. We compare that with the regrets caused by a deterministic attack, which the agent can reverse, and find that the proposed random attack causes more loss. The regret of the proposed attack is computed assuming that the victim agent uses the false policies, for example $\pi^*(Y_1)$ or $\pi^*(Y_2)$, which were learned in the tabular Q-learning.
\begin{figure*}
\centering
    \includegraphics[width=17cm,height=32cm, keepaspectratio]{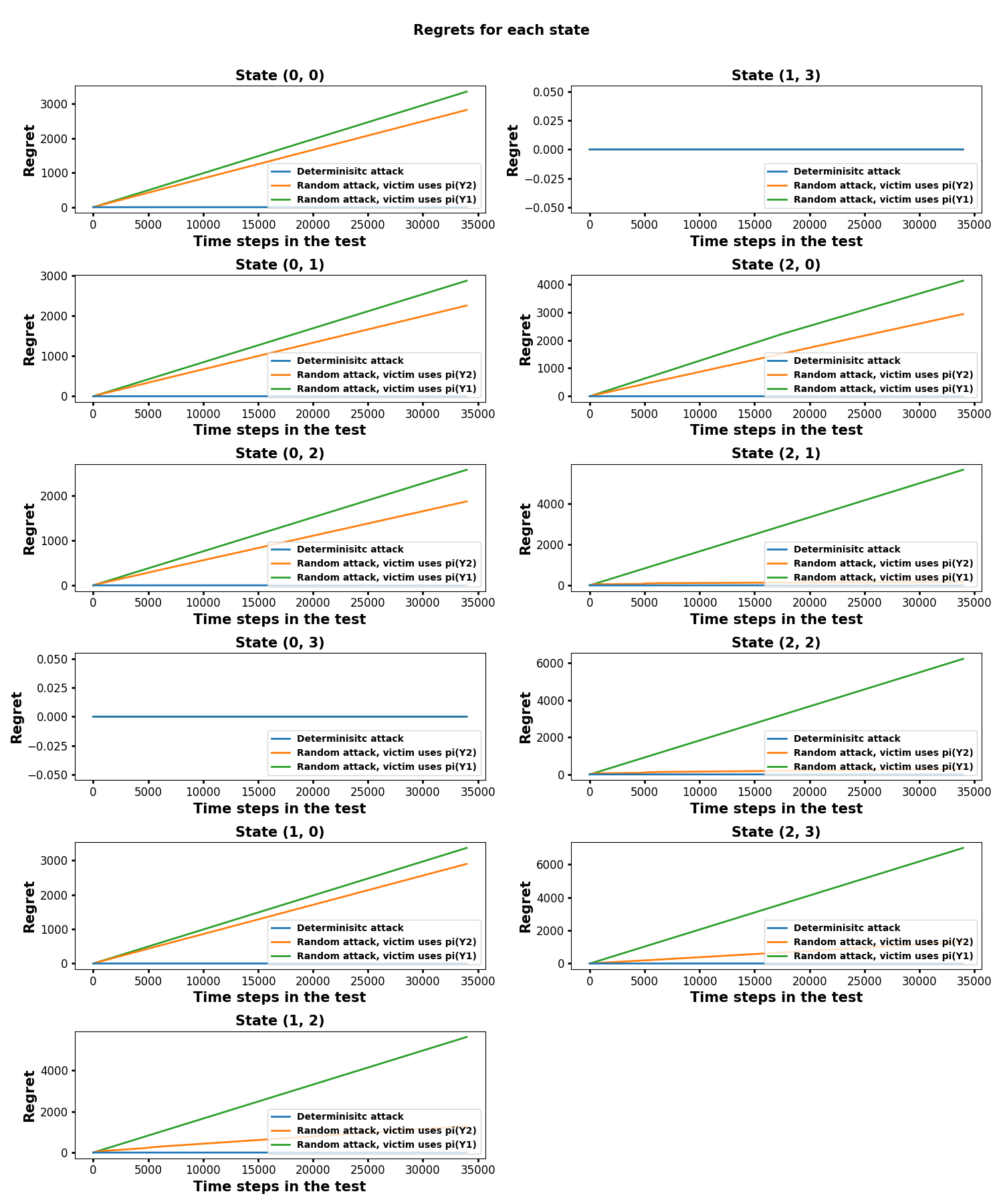}
    \caption{Regret of rate-distortion attacks on tabular Q-learning}
    \label{TQ-learning}
\end{figure*}

\textbf{(2) Rate distortion attacks on deep Q-learning in Cartpole}: We tested the effect of random attacks on the DQN algorithm in Cartpole \cite{brockman2016openai}. The exact settings of Cartpole-v1 environment are from \cite{gymnasium}. The agent is simulated as a pole attached by an un-actuated joint to a cart, as shown in Figure \ref{cartpole}, which moves along a frictionless track. The pendulum is placed upright on the cart and the goal is to balance the pole by applying forces in the left and right directions on the cart. The agent has to decide between two actions: moving the cart left or right, so that the pole attached to it stays upright. The state space is of 4 elements in Cartpole, with the values corresponding to the following positions and velocities: \{Cart Position, Cart Velocity, Pole Angle, Pole Angular Velocity\}. For the details of deterministic transition functions and other details, please refer to the documentation of \cite{gymnasium}. 

\textbf{Random state observation attack:} The attackers create fake transition dynamics by injecting perturbation noise $\delta$ into the state observations of the victim agent during training. The false disturbance noise $\delta_i$ is sampled from a designed perturbation distribution $p(\delta_i|\delta_j)$, where $\delta_i$ is the false noise while $\delta_j$ is the true noise in the training. Once $\delta_i$ is sampled, it remains constant during the DQN training. We tested a discrete perturbation distribution and every sampled noise $\delta_i$ corresponds to a different transition dynamics $X_i$ and an optimal policy $\pi^*(X_i)$. In our experiment, we considered a sample space of discrete perturbations $\{\delta_1 = (-0.3, 0.3, -0.3, 0.3), \delta_2=(0,0,0,0)\}$, and the designed perturbation distribution is $p(\delta_i|\delta_j)=1/2,i,j=1,2.$
The regret caused by the proposed random observation attack is shown in Figure \ref{DQN-attack}.

\textbf{(3) Random state observation attack via random permutation}: We show that our proposed information-theoretic rate-distortion attack can be implemented by perturbing the agent's state observations during training. In this experiment, the environment consists of $3$ discrete states and $3$ actions, For each fixed action, the random transition kernel can be represented by a  $3\times3$ transition matrix. There are $6$ possible transition kernels $\{X_1,X_2,\dots,X_6\}$. One example, $X_1$, is presented in Table \ref{model-based-X1}. The remaining $5$ transition kernels are obtained by permuting the rows and columns of $X_1$, that is, $X_i=P_iX_1P^T_i,i\in\{1,2,\dots,6\}$, where $P_i$ denotes the corresponding permutation matrix. We assume a uniform prior distribution over the $6$ true transition kernels. The reward function $r(s,a)$ used in this example is specified in Table \ref{tab:reward-model-based}.
\begin{table}[]
    \centering
    \begin{tabular}{|c||c|c|c|}
    \hline
         $s\downarrow a\rightarrow$&left&right&stay\\
         \hline
         $0$&$1.0$&$1.3$&$1.1$\\
         \hline
         $1$&$2.0$&$2.3$&$2.1$\\
         \hline
         $2$&$3.0$&$3.3$&$3.1$\\
         \hline
    \end{tabular}
    \caption{Reward table of the model-based training}
    \label{tab:reward-model-based}
\end{table}

\begin{table}[h]
    \centering
            \centering
            \begin{tabular}{|c|c|c|}
                \hline
                1 & 0 & 0 \\
                \hline
                0.8 & 0.2 & 0 \\
                \hline
                0 & 0.8 & 0.2 \\
                \hline
            \end{tabular}
            \begin{tabular}{|c|c|c|}
                \hline
                0.2 & 0.8 & 0 \\
                \hline
                0 & 0.2 & 0.8 \\
                \hline
                0 & 0 & 1 \\
                \hline
            \end{tabular}
            \begin{tabular}{|c|c|c|}
                \hline
                0.9 & 0.1 & 0 \\
                \hline
                0.1 & 0.8 & 0.1 \\
                \hline
                0 & 0.1 & 0.9 \\
                \hline
            \end{tabular}
            \caption{$X_{1}^{left},\ X_1^{right},\ X_1^{stay}$}
            \label{model-based-X1}
        \end{table}

Since there are $3$ states, the number of possible permutations between states is $3!=6$. A permutation over the states corresponds to simultaneously permuting the rows and columns of the transition kernel. For example, if the attacker maps the true state $0$ to the delusional state $1$ while keeping the true states $1$ and $2$ unchanged during training, this operation is equivalent to swapping both the first and second rows and the first and second columns of the true transition kernel matrix.

We assume that the victim agent does not know the exact transition kernels but only knows a uniform prior distribution over the set of possible ground-truth kernels. In each training episode, after the environment samples the true transition kernel, the agent interacts with the environment and collects a sequence of transition tuples $(s_1, a_1, r_1, s_2), (s_2, a_2, r_2, s_3),\dots, (s_T, a_T, r_T, s_{T+1})$, where $T=100$. Each empirical kernel entry $\hat{X}_{s,a}$ is estimated by the empirical frequency of state transitions. 

In each training episode, the attacker's strategy is randomly chosen from one of the six possible permutations. According to the chosen permutation, the attacker changes the true state observations by mapping each true state to its corresponding delusional state before the agent perceives it. The attacker will need to modify every state observation throughout the trajectory collection process. We show in Figure \ref{fig:state-obs-attack} the increase in regret under random state observation attacks across $20$ training episodes.
\begin{figure}
    \centering
    \includegraphics[width=0.5\linewidth]{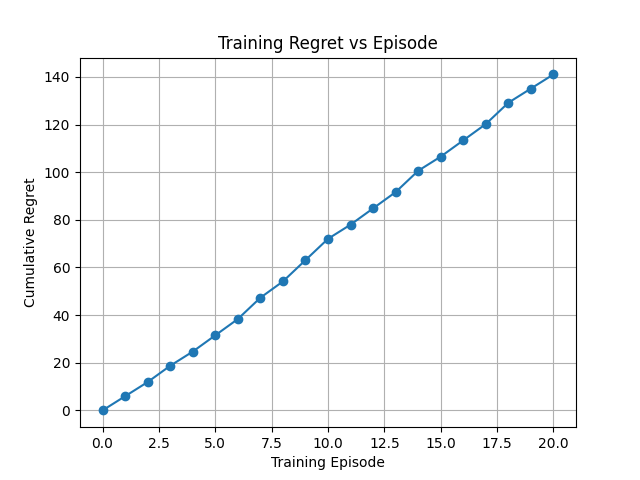}
    \caption{Training regret under rate-distortion state observation attacks}
    \label{fig:state-obs-attack}
\end{figure}
\section{Conclusion}
We propose an "invincible" type of adversarial attack: the rate-distortion information-theoretic attack. By forcing the victim agent to observe random transition kernels, the attack prevents it from gaining information about the ground-truth transition kernel. We prove a lower bound on the regret caused by the proposed attack and numerically show its impact on both planning and model-free learning. Both results show that this attack significantly decreases the agent's expected state values. We explore the existence of optimal policies and a planning algorithm for MDPs with random kernels. Please see more discussions on the existence of optimal policy in the Appendix. The limitations include the need to develop effective policies when the traditional optimal policy does not exist. In addition, identifying convergent planning/learning algorithms for these settings is future work.

%\section*{Acknowledgments}
%This was was supported in part by......

%Bibliography
\bibliographystyle{unsrt}  
\bibliography{refs}

\end{document}